\newtheorem{proposition}{Proposition}
\definecolor{LightOrange}{rgb}{1, 0.96, 0.87}
\definecolor{LightCyan}{rgb}{0.88,1,1}
\newcommand{\croppdf}[1]{\IfFileExists{#1-crop.pdf}{}{\immediate\write18{pdfcrop #1.pdf}}}
\newcommand{\bx}{\mathbf{x}}
\newcommand{\by}{\mathbf{y}}
\newcommand{\ba}{\mathbf{a}}
\newcommand{\bb}{\mathbf{b}}
\newcommand{\bty}{\Tilde{\mathbf{y}}}
\newcommand{\subbest}[1]{\textbf{{#1}}}
\crefname{section}{Sec.}{Secs.}
\Crefname{section}{Section}{Sections}
\Crefname{table}{Table}{Tables}
\crefname{table}{Tab.}{Tabs.}
\begin{document}

\title{Image-level Regression for Uncertainty-aware Retinal Image Segmentation}
%


%

\author{Trung DQ. Dang\thanks{Equal contributions} \and
Huy Hoang Nguyen$^{\star}$ \and
Aleksei Tiulpin \\
\and University of Oulu, Finland \\ 
{\tt\small\{trung.ng,huy.nguyen,aleksei.tiulpin\}@oulu.fi}}


%
\maketitle              

\begin{abstract}
Accurate retinal vessel (RV) segmentation is a crucial step in the quantitative assessment of retinal vasculature, which is needed for the early detection of retinal diseases and other conditions. Numerous studies have been conducted to tackle the problem of segmenting vessels automatically using a pixel-wise classification approach. The common practice of creating ground truth labels is to categorize pixels as foreground and background. This approach is, however, biased, and it ignores the uncertainty of a human annotator when it comes to annotating e.g. thin vessels. In this work, we propose a simple and effective method that casts the RV segmentation task as an image-level regression. For this purpose, we first introduce a novel Segmentation Annotation Uncertainty-Aware (SAUNA) transform, which adds pixel uncertainty to the ground truth using the pixel's closeness to the annotation boundary and vessel thickness. To train our model with soft labels, we generalize the earlier proposed Jaccard metric loss to arbitrary hypercubes for soft Jaccard index (Intersection-over-Union) optimization. Additionally, we employ a stable version of the Focal-L1 loss for pixel-wise regression. We conduct thorough experiments and compare our method to a diverse set of baselines across 5 retinal image datasets. Our empirical results indicate that the integration of the SAUNA transform and these segmentation losses led to significant performance boosts for different segmentation models. Particularly, our methodology enables UNet-like architectures to substantially outperform computational-intensive baselines (see~\cref{fig:hr_lr_comparison}). Our implementation is available at \url{https://github.com/Oulu-IMEDS/SAUNA}.
\end{abstract}

\section{Introduction}
\begin{figure}[t]
    \centering
    \croppdf{figures/SAUNAR_time_perf}
    \includegraphics[width=0.47\textwidth]{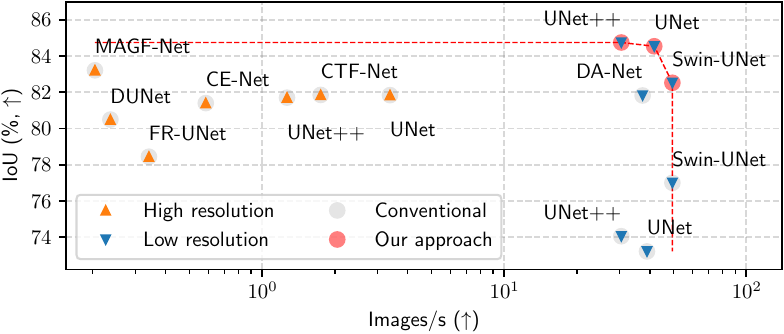}
    \caption{Comparisons of performance and throughput across methods using high-resolution (HR) and low-resolution (LR) inputs on the FIVES test set. The x-axis is in the log scale. The red line indicates the best result among the baselines. ``Conventional'' indicates baselines using binary masks (hard labels).
    The corresponding quantitative results are in~\cref{tab:exp_arch_comparisons}.
    }
    \label{fig:hr_lr_comparison}
\end{figure}

The retina serves as a non-invasive diagnostic window, providing insights into diverse clinical conditions. Quantitative assessment of retinal vasculature is essential not only for the diagnosis and prognosis of retinal diseases, but also for identifying systemic conditions such as hypertension, diabetes, and cardiovascular diseases. Numerous studies have been conducted to automate the segmentation of retinal blood vessels~\cite{li2020iternet,li2023magf,fraz2012ensemble,wang2019dual,wang2020ctf}. Typically, the problem of retinal vessel (RV) segmentation is formulated as semantic segmentation, which can be solved using Deep Learning (DL) approaches. 
\begin{figure*}[t]
    \centering
    \croppdf{figures/SAUNAR_workflow}
    \includegraphics[width=0.8\textwidth]{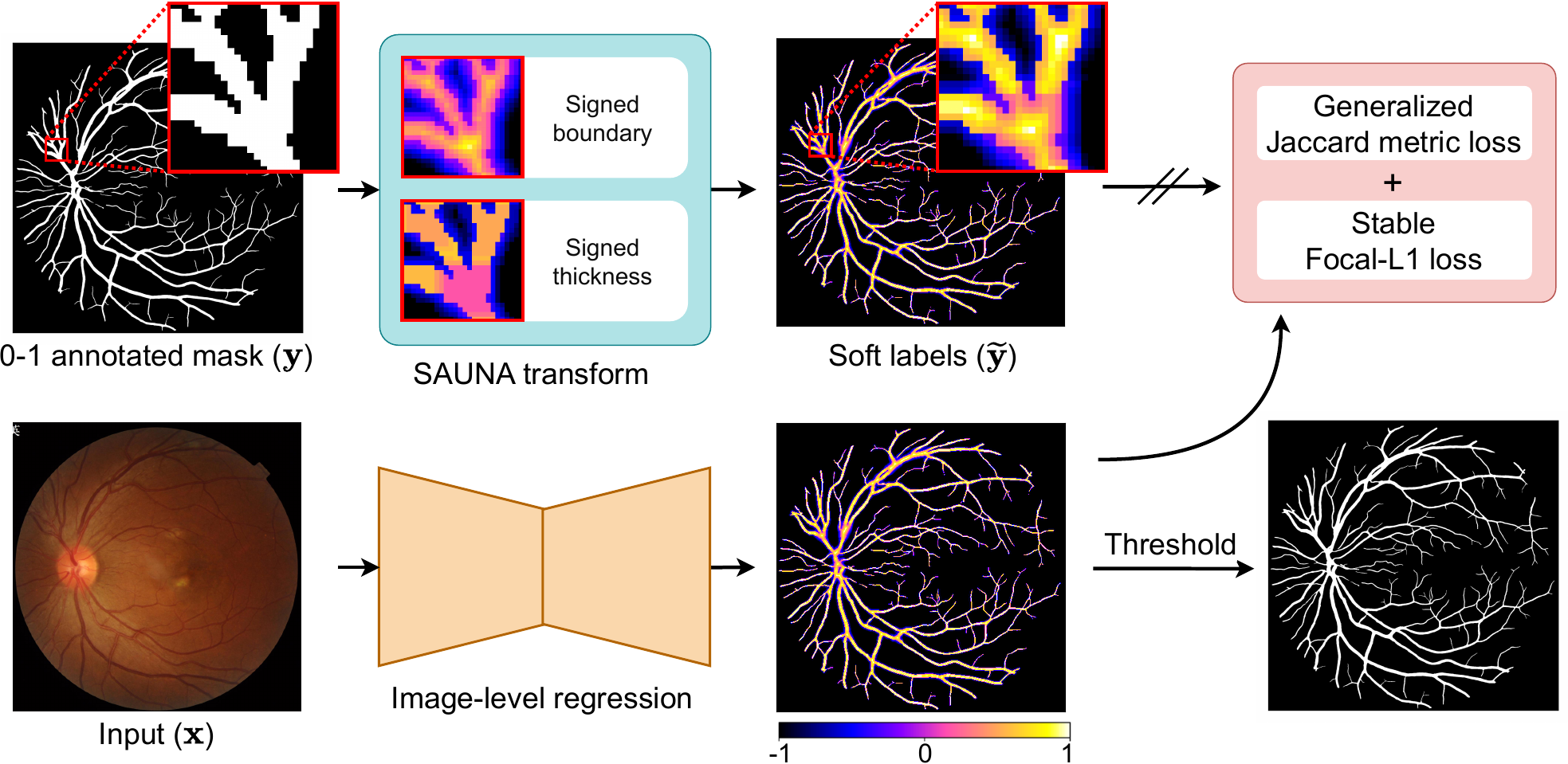}
    \caption{Our workflow of image-level regression for retinal image segmentation. Our primary contributions are the SAUNA transform (see~\cref{sc:sauna}), an extension of the Jaccard metric loss~\cite{wang2023jaccard} (see~\cref{sc:gjml}), and a stable version of the Focal-L1 loss~\cite{dang2024singr} (see~\cref{sc:focal_l1}).
    }
    \label{fig:workflow}
\end{figure*}
In semantic segmentation, the common training setup requires a collection of pairs of images and their corresponding segmentation masks (ground truth; GT), represented as $\mathbf{X} \times \mathbf{Y}$. To label a GT mask $\by \in \mathbf{Y}$, annotators  \emph{categorize} each pixel into foreground or background classes, thus termed a ``hard label''\footnote{Hereinafter, the terms ``hard label'', binary GT mask, and 0-1 GT mask are exchangeable.}.  Using such GTs, DL models for semantic segmentation are usually trained as pixel-wise empirical risk minimization problems over a dataset~\cite{bertels2019optimizing}. Therefore, the majority of prior studies primarily rely on \emph{classification losses}, such as cross-entropy loss and focal loss -- for the task as follows: $\mathcal{L}_{\mathrm{PixelsCls}} = \frac{1}{D} \sum_{l=1}^{D} \ell_{\mathrm{CLS}}(f_\theta(\bx)_{i}, \by_{i})$, where $D$ is the number of pixels, $f_\theta$ is a parametric segmentation model that takes an image $\bx$ and predicts its respective segmentation mask. The term $\ell_{\mathrm{CLS}}(\cdot, \cdot)$ is typically the cross-entropy or focal loss. Moreover, another line of research focuses on optimizing the semantic segmentation task at the image level. As such, various segmentation losses -- like Dice, Jaccard, Tversky, and Lovasz losses -- have been developed to directly address the Dice score and Intersection-over-Union~\cite{salehi2017tversky,sudre2017generalised,berman2018lovasz,bertels2019optimizing}. Despite these advancements, those losses are originally designed for hard labels.


We argue that the aforementioned categorical pixel-wise labeling approach (0 or 1 in the case of vessel segmentation) implicitly overlooks the inherent annotation process uncertainty. In retinal images (RIs), vessels are typically not discernible due to factors such as blurry boundaries, thickness, and imaging quality. Many attempts have been made to consider this matter by \textit{softening} human annotations, which discourages DL models from relying on the ``fully certain'' ground truth masks. Most prior studies tackle the problem using label smoothing techniques~\cite{silva2021using,ma2023enhanced,alcover2023soft,wang2023dice,wang2023jaccard}. The idea of label smoothing is to provide the model information that one should not assign zero probability to BG pixels when annotating,
and it was first introduced in image classification~\cite{szegedy2016rethinking}. Another line of work requires image-wise multi-annotations to model the uncertainty, which is highly expensive to obtain~\cite{silva2021using}. Other studies~\cite{liu2022combining,xue2020shape} incorporate signed distance map regression as an auxiliary task alongside segmentation losses. Still, those studies commonly approach the medical segmentation problem as a pixel classification task.

In the retinal imaging domain, most of the prior studies also follow the trend of using hard labels. Similar to DL in general, most studies in this domain focus on improving DL architectures by advancing their capacity or embedding domain knowledge into architecture design~\cite{wang2022net,qiu2023rethinking,li2020iternet,wang2019dual,li2023magf,liu2022full,gu2019net,wang2020ctf}.
However, recent studies tend to look for an optimal trade-off between the model throughput and performance~\cite{li2020iternet,wang2019dual,li2023magf,liu2022full,gu2019net,wang2020ctf}. Instead of standardizing input images into a reasonable size, those studies perform sliding windows over high-resolution RIs, which is expensive for both training and evaluation. In our work, we characterize these techniques as patch-based methods and generally question such an approach.

This work has several contributions (\cref{fig:workflow}). Firstly, we propose a new and \textit{simple} method that tackles the segmentation problem as \emph{image-level regression}. 
To achieve this, we propose a Segmentation Annotation UNcertainty-Aware (SAUNA) transform, inspired by the observation that it is hard to draw exact vessel boundaries, especially for thin vessels. The SAUNA transform generates a \emph{signed soft segmentation map} $\Tilde{\by} \in [-1,1]^{D}$ from $0-1$ annotated masks $\by$, without the need for multiple annotations per image. Specifically, positive and negative regions in $\Tilde{\by}$ represent foreground (FG) and background (BG) respectively. As BG pixels distant from the FG's vicinity are highly certain, the SAUNA transform explicitly marks them with the value $-1$. To train our model with these labels, we employ both image-level and pixel-level regression losses. For the former, we utilize the Jaccard metric loss (JML)~\cite{wang2023jaccard}. We prove that this loss can be used beyond $[0, 1]^D$ domain.
For the latter, we propose a stable version of the Focal-L1 loss~\cite{dang2024singr}. Finally, we conducted standardized and extensive experiments on $5$ RI datasets. 
Our findings indicate that while using high-resolution inputs can be beneficial, it is indeed possible to achieve both high performance and efficiency simultaneously, as illustrated in~\cref{fig:hr_lr_comparison}.

\section{Related work}

Traditional approaches to vessel segmentation in retinal images (RIs) encompass a variety of image processing and analysis techniques. The primary methods include line and edge extraction~\cite{bankhead2012fast,ricci2007retinal}, template matching~\cite{chaudhuri1989detection}, morphology-based techniques~\cite{fraz2012approach}, and probabilistic approaches~\cite{orlando2016discriminatively}. In recent years, deep neural networks have become prevalent in the semantic segmentation of generic medical images, with UNet~\cite{ronneberger2015u} popularizing the encoder-decoder architecture with skip connections. This architecture employs an encoder to extract local and global features, while the decoder merges high-level and low-level features via skip connections to predict fine-grained segmentation masks. Afterwards, UNet++ introduced by Zhou~\etal~\cite{zhou2019unet++} combines multi-scale feature maps in skip connections, further advancing segmentation performance.

In the realm of RV segmentation, numerous UNet variants have emerged to enhance feature extraction and context integration. CE-Net~\cite{gu2019net} employs dilated convolution for expanded receptive fields, while SA-UNet~\cite{guo2021sa} integrates spatial attention to capture long-range dependencies. Wang et al.\cite{wang2021retinal} introduce a Context Guided Attention Module with hard sample mining. DUNet\cite{wang2019dual} utilizes dual encoders, and Transformer modules are integrated into models~\cite{cao2022swin, lin2023stimulus}. FR-UNet~\cite{liu2022full} introduces multi-resolution convolution and feature aggregation. IterNet~\cite{li2020iternet} extends the architecture iteratively, and Li et al.~\cite{li2023magf} propose multiscale feature modules. 

Previous studies in RV segmentation often follow two main tendencies. First, to preserve the details of RVs, many approaches crop patches from high-resolution RIs for training and prediction, which significantly reduces throughput. Some studies, like CTF-Net~\cite{wang2020ctf} and DA-Net~\cite{wang2022net}, use both whole-image and patch-based information via a dual-branch approach. In contrast, we aim to develop our method in a computationally efficient setting, wherein all RIs are resized to a standard size, and processed holistically. Second, these studies typically treat RV segmentation as a pixel-classification task and focus on embedding domain knowledge into DL architectures.
In this study, we focus on incorporating uncertainty into segmentation masks. As a result, we propose a novel method to transform binary RV masks into soft labels for effective image-level regression.

Soft labels have demonstrated their potential in various domains. For instance, Xue~\etal~\cite{xue2020shape} employ signed distance maps for hippocampus segmentation, while Vasudeva~\etal~\cite{vasudeva2023geols} use the unsigned geodesic distance transform for brain magnetic resonance (MR) imaging and computed tomography (CT) scans. Additionally, Dang~\etal~\cite{dang2024singr} introduce the signed normalized geodesic transform to model uncertainty around brain tumor boundaries. However, most previous studies use soft labels for supplementary tasks. Inspired by Dang~\etal~\cite{dang2024singr}, we formulate RV segmentation as an image-level regression problem, where soft labels are our primary targets.

\section{Methodology}



\begin{figure*}[t!]
    \centering
    \croppdf{figures/SAUNA_viz}
    \croppdf{figures/input_gt}    
    \hspace*{\fill}
    \begin{subfigure}{0.12\linewidth}
        \includegraphics[width=\textwidth, valign=t]{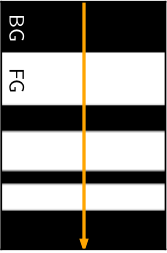}
        \caption{GT}
        \label{fig:sauna_gt}
    \end{subfigure}
    \hfill
    \begin{subfigure}{0.8\linewidth}
        \includegraphics[width=\textwidth, valign=t]{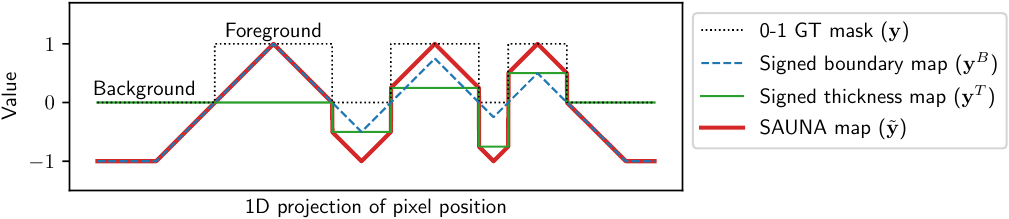}
        \caption{Transition from the GT in (a) to a SAUNA map}
        \label{fig:sauna_map}
    \end{subfigure}
    \hspace*{\fill}
    \caption{Illustration of the transformation from a 0-1 ground truth (GT) mask to its associated SAUNA map (best viewed in color): (a) 2D GT mask with an orange projection, (b) corresponding transformations in the 1D projection.}
    \label{fig:viz_sauna}
\end{figure*}

\subsection{Overview}

Due to the complexity of objects of interest in medical images, a single binary mask with $1$s and $0$s, indicating FG and BG pixels, does not properly reflect the uncertainty of the annotating process done by human annotators. Particularly, in RIs, labeling veins and arteries is highly challenging due to the imaging quality, the limited visibility of tiny branches as well as the variation of personal skills. 

To tackle the aforementioned issue, we propose a simple technique, called the Segmentation Annotating UNcertainty-Aware (SAUNA) transform (see~\cref{sc:sauna}), that takes into account the uncertainty of pixels with respect to their distances to the boundary. Following~\cite{dang2024singr}, we design SAUNA to primarily focus on the vicinity of objects of interest rather than the whole image. The SAUNA transform allows us to convert 0-1 annotated segmentation masks into heatmaps with values ranging in $[-1, 1]$.

Given the soft labels produced by the SAUNA transform, our objective is to minimize both the ``soft'' Jaccard index~\cite{wang2023jaccard} and pixel-wise similarity. To achieve this, we employ a combination of image-level and pixel-level regression losses. For image-level regression, we extend JML~\cite{wang2023jaccard}, which was originally designed to optimize ``soft'' Intersection-over-Union on the unit hypercube, to operate on an arbitrary hypercube domain, including $[-1,1]^D$ (see~\cref{sc:gjml}). For pixel-level regression, we utilize a stable version of Focal-L1 (see~\cref{sc:focal_l1}).

\subsection{Segmentation Annotating UNcertainty-Aware (SAUNA) Transform}
\label{sc:sauna}
Let $\Omega = \{1, \dots , H\} \times \{1, \dots, W\}$ denote the set of pixel coordinates. For 0-1 annotated mask $\by$, we have $\by_{i} \in \{0, 1\}, \ \forall i \in \Omega$. We firstly define the unsigned shortest Euclidean distance and thickness transforms as follows
\begin{align}
     \mathbf{d}_i &= \min_{j\in\Omega:\by_{j} \neq \by_{i}} \| i - j \|_2 , \quad i \in \Omega \\
    \mathbf{t}_i &= \max_{j \in C: \by_{i}=1,\left \| i-j \right \|_{\infty} \leq m} \mathbf{d}_{j},\quad i \in \Omega
\end{align}
where $C = \{ j \in \Omega \mid \mathbf{d}_j \leq m \}$ is the set of pixels relatively close to FG regions, and $m=\max_{k\in \Omega, \by_{k}=1} \mathbf{d}_{k}$ represents the maximum Euclidean distance from FG pixels to their nearest boundary pixels. The thickness transform is defined as the application of max-pooling to a positive distance map. As the window size $m$ is significantly large by definition, locally maximal distance values are propagated across the region. Here, we respectively introduce the signed normalized boundary and thickness transforms as follows
\begin{align}
    \by^B_{i} &= s(\by_{i}) \cdot \min \left (1, \frac{\mathbf{d}_i}{m} \right ), \quad i \in \Omega \\
    \by^T_{i} &= s(\by_{i}) \cdot \left [1 - \min \left (1, \frac{\mathbf{t}_i}{m} \right ) \right ], \quad i \in \Omega
\label{eq:boundary_uncertainty}
\end{align}
where $s(\by_{i})=\operatorname{sign}(2\by_{i} - 1)$. The minimum function is to ensure that we merely consider the neighboring regions of the boundaries, which implies ignoring distant BG pixels. $\by^B_{i} = 0$ iff $i$ corresponds to a boundary pixel. 

To this end, we propose the SAUNA transform based on boundary and thickness as the following
\begin{align}
    \bty_{i} = \by^B_{i} + \by^T_{i} \in [-1,1], \quad i \in \Omega
\end{align}
In \cref{fig:viz_sauna}, we provide a graphical illustration of how the SAUNA transform generates the signed soft labels from the 0-1 GT mask shown in~\cref{fig:sauna_gt}. In~\cref{fig:sauna_map}, $\by^B$ generates a piece-wise function that exhibits irregular zig-zag behavior when the pixel is close enough to FG regions. For distant pixels that are highly certain, it becomes a constant function with a value of ``-1''. $\by^T$ produces a step function whose value is inversely proportional to the thickness of either FG or BG region. The SAUNA map is derived from the summation of the two maps. This process preserves the behavior of $\by^B$ around the (easiest) thickest region while generating adaptive margins across the boundaries of (hard) thin ones. Intuitively, such a map encourages the image-level regression model to prioritize attention to challenging areas.                                                    

\subsection{Generalized Jaccard Metric Loss}
\label{sc:gjml}
 JML introduced by Wang~\etal~\cite{wang2023jaccard} was originally designed for soft segmentation labels, and it was used for knowledge distillation~\cite{hinton2015distilling}. The loss is formulated as
 \begin{align}\label{eq:jml}
     \Delta_{\mathrm{JML}}(\mathbf{a},\mathbf{b})= 1 - \frac{\left \| \mathbf{a} + \mathbf{b}\right \|_1 - \left \| \mathbf{a} - \mathbf{b}\right \|_1}{ \left \| \mathbf{a} + \mathbf{b}\right \|_1 + \left \| \mathbf{a} - \mathbf{b}\right \|_1},
 \end{align}
 and is proven to be a metric for any $\mathbf{a}, \mathbf{b} \in [0,1]^{D}$~\cite{wang2023jaccard}. The fact that the $\Delta_\mathrm{JML}$ loss is semi-metric or metric implies that $\forall \mathbf{a}, \mathbf{b} \in  [0,1]^{D}, \Delta_\mathrm{JML}(\mathbf{a}, \mathbf{b}) = 0 \iff \mathbf{a} \equiv \mathbf{b}$, making it an objective that directly optimizes the IoU between the predictions and the soft labels.
 To perform the \emph{image-level regression} task on the generated SAUNA maps in $[-1,1]^{D}$, we prove that the domain of $\Delta_{\mathrm{JML}}$ can be an arbitrary hypercube from $\mathbb{R}^D$, including $[-1, 1]^{D}$. 
 
 \begin{proposition}[Jaccard Metric Loss on a hypercube in $\mathbb{R}^D$]
     \label{prop:generalized_JML_metric}     
     $\Delta_{\mathrm{JML}}$ is a semi-metric in $[\alpha, \beta]^{D} \subseteq \mathbb{R}^D$.
Specifically, $\forall \ba, \bb \in [\alpha, \beta]^{D}$, we have
\begin{enumerate}[(i)]
    \item Reflexivity: $\Delta_{\mathrm{JML}}(\ba, \bb) = 0 \Longleftrightarrow \ba \equiv \bb$
    \item Positivity: $\Delta_{\mathrm{JML}}(\ba, \bb) \geq 0$
    \item Symmetry: $\Delta_{\mathrm{JML}}(\ba, \bb) = \Delta_{\mathrm{JML}}(\bb, \ba)$
\end{enumerate}
\begin{proof}
In Supplementary Sec. \textcolor{red}{1}.
\end{proof}
\end{proposition}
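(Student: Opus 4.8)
The plan is to avoid re-deriving anything about the $[0,1]^D$ case and instead verify the three semi-metric axioms directly from the algebraic form of $\Delta_{\mathrm{JML}}$, since none of reflexivity, positivity, or symmetry actually uses the sign constraint $\ba,\bb \geq 0$ — that constraint is only needed for the triangle inequality, which is precisely why the statement weakens ``metric'' to ``semi-metric''. First I would clear away the double-fraction clutter by rewriting the loss. Writing $P = \|\ba+\bb\|_1$ and $Q = \|\ba-\bb\|_1$, a one-line simplification gives
\begin{align}
\Delta_{\mathrm{JML}}(\ba,\bb) &= 1 - \frac{P-Q}{P+Q} = \frac{2Q}{P+Q} = \frac{2\,\|\ba-\bb\|_1}{\|\ba+\bb\|_1 + \|\ba-\bb\|_1}.
\end{align}
This reformulation is the crux: it expresses the loss purely through the nonnegative quantity $Q$ in the numerator, which makes positivity and reflexivity transparent and sidesteps the fact that $P-Q$ can now be negative when coordinates carry opposite signs.

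Next I would control the denominator via the elementary coordinatewise identity $|x+y| + |x-y| = 2\max(|x|,|y|)$, summed over $i \in \{1,\dots,D\}$, to obtain $P+Q = 2\sum_i \max(|a_i|,|b_i|)$. This is nonnegative and vanishes if and only if $a_i = b_i = 0$ for every $i$, i.e. only at the single point $\ba = \bb = \mathbf{0}$, which lies in the domain precisely when $\alpha \leq 0 \leq \beta$ (as for $[-1,1]^D$). There I would adopt the standard convention $\Delta_{\mathrm{JML}}(\mathbf{0},\mathbf{0}) = 0$, consistent with the original definition. With $P+Q>0$ off that point, \emph{symmetry} is immediate because the $\ell_1$ norm is invariant under negating its argument, so both $P$ and $Q$ are unchanged when $\ba$ and $\bb$ are swapped. \emph{Positivity} follows because $Q \geq 0$ and $P+Q>0$ force $\Delta_{\mathrm{JML}} = 2Q/(P+Q) \geq 0$. \emph{Reflexivity} follows by the same token: $\Delta_{\mathrm{JML}}(\ba,\bb)=0 \iff Q = \|\ba-\bb\|_1 = 0 \iff \ba \equiv \bb$, with the $\ba=\bb=\mathbf{0}$ case covered by convention.

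The argument is therefore essentially routine once the loss is written as $2Q/(P+Q)$, and the only genuine care points are (a) the indeterminate form $0/0$ at $\ba=\bb=\mathbf{0}$, handled by the value-$0$ convention (note this is a genuine discontinuity, since the ratio tends to $1$ along $\bb=\mathbf{0}$ but to $0$ along $\ba=\bb$, so it cannot be resolved by a limit), and (b) resisting the temptation to also claim the triangle inequality. I expect the latter to be the real obstacle to any strengthening: on a hypercube containing negative entries the cross term $P-Q = 2\sum_i \operatorname{sign}(a_i b_i)\min(|a_i|,|b_i|)$ can be negative, pushing $\Delta_{\mathrm{JML}}$ above $1$ and breaking the geometric structure that makes it a full metric on $[0,1]^D$. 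Hence I would deliberately stop at these three axioms and leave the triangle inequality unclaimed, matching the ``semi-metric'' phrasing.
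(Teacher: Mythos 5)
Your proof is correct and follows essentially the same route as the paper's: rewrite the loss as $2\left\|\ba-\bb\right\|_1 / \left(\left\|\ba+\bb\right\|_1 + \left\|\ba-\bb\right\|_1\right)$, after which positivity, reflexivity, and symmetry all drop out of elementary properties of the $\ell_1$ norm. The one place you go beyond the paper is in explicitly treating the degenerate point $\ba=\bb=\mathbf{0}$ (where the denominator vanishes and the expression is $0/0$, relevant whenever $\alpha\leq 0\leq\beta$, e.g.\ for $[-1,1]^D$), which the paper's proof silently ignores; this is a minor but genuine improvement in rigor.
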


Given any pair of an input image and its 0-1 annotated mask $(\bx, \by)$, we utilize a parametric function $f_\theta$ to produce $\mathbf{f}=\tanh{f_\theta(\bx)}$, and apply the SAUNA transform on $\by$ to generate the soft label $\bty$. Here, both $\mathbf{f}$ and $\bty$ are in $[-1,1]^{D}$. To this end, $\forall \mathbf{f}, \bty \in [-1, 1]^D$, we rely on~\cref{prop:generalized_JML_metric} to introduce generalized JML (GJML) as follows
\begin{align}
    \mathcal{L}_{\mathrm{GJML}}(\mathbf{f}, \bty) &= \Delta_{\mathrm{JML}}(\mathbf{f}, \bty) \nonumber \\
    &= 1 - \frac{\left \| \mathbf{f} + \bty \right \|_1 - \left \| \mathbf{f} - \bty \right \|_1}{ \left \| \mathbf{f} + \bty \right \|_1 + \left \| \mathbf{f} - \bty \right \|_1}.  \label{eq:loss_gjml}
\end{align}
From~\cref{prop:generalized_JML_metric}, we have that $\mathcal{L}_{\mathrm{GJML}}$ is a semi-metric, which allows us to perform direct image-level regression for IoU maximization with soft labels.

\begin{figure}[t!]
    \centering
    \croppdf{figures/focall1_v2_loss}
    \croppdf{figures/focall1_loss}    
    \croppdf{figures/pitfall_focal_l1}    
    \subfloat[Focal-L1 loss~\cite{dang2024singr}]{
    \includegraphics[width=0.23\textwidth]{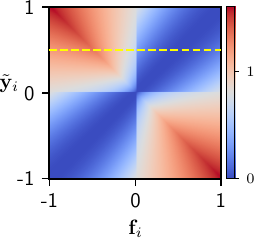}}
    \hfill
    \subfloat[Simplified Focal-L1 loss]{
    \includegraphics[width=0.23\textwidth]{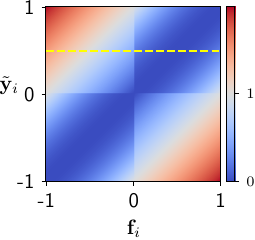}}        
    \newline
    \subfloat[An unexpected local minima at $-1$ ($\bty=0.5$)\label{fig:pitfall_focall1}]{
    \includegraphics[width=0.46\textwidth]{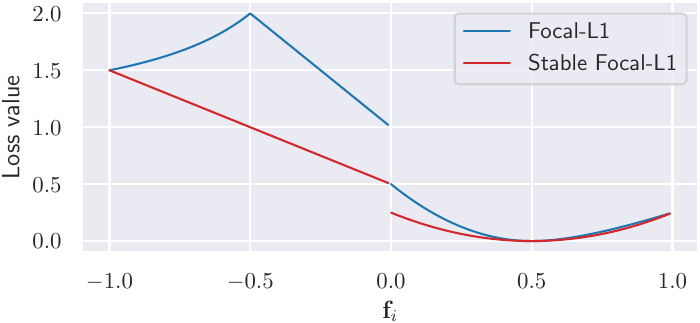}}
    \caption{2D loss surfaces of the Focal-L1 loss and its stable version with $\gamma=1$. Colors represent loss magnitudes. The dashed yellow lines in (a-b) indicate the projections shown in (c).
    }
    \label{fig:focalf1_2d_surface}
\end{figure}

\subsection{Stable Focal-L1 Loss}
\label{sc:focal_l1}

Given a pair of prediction and a soft label $\mathbf{f}, \bty \in [-1,1]^D$, Focal-L1, introduced in~\cite{dang2024singr} for pixel-level regression, is formulated as follows
\begin{equation}
    \mathcal{L}_{\mathrm{FocalL1}} (\bty, \mathbf{f}) = \frac{1}{| \Omega |} \sum_{i \in \Omega} |\bty_i - \mathbf{f}_i| \underbrace{\frac{|\bty_i - \mathbf{f}_i|^{\gamma \mathbbm{I}(\bty_i \mathbf{f}_i \geq 0)}}{\max(|\bty_i|, |\mathbf{f}_i|)} }_{\mathrm{Sample\ weighting}}, \label{eq:focal_l1_loss}
\end{equation}
where $\gamma$ is a positive hyperparameter, and $\mathbb{I}(\cdot)$ is the indicator function. The sample weighting term allows Focal-L1 to prioritize hard pixels over easy ones. 

We observe that the denominator of the weighting term leads to unexpected local minima, as graphically demonstrated in~\cref{fig:focalf1_2d_surface}. Specifically, for any $\bty_i \notin \{-1,1\}$, there are always two minima, one of which is an unexpected local minimum, as shown in~\cref{fig:pitfall_focall1}. Therefore, we here eliminate the denominator from Focal-L1 to form a stable version of Focal-L1 as follows
\begin{equation}
    \mathcal{L}^{S}_{\mathrm{FocalL1}} (\mathbf{f}, \bty) = \frac{1}{| \Omega |} \sum_{i \in \Omega} |\bty_i - \mathbf{f}_i||\bty_i - \mathbf{f}_i|^{\gamma \mathbbm{I}(\bty_i \mathbf{f}_i \geq 0)}. \label{eq:focal_l1_loss}
\end{equation}
In~\cref{prop:stable_focall1}, we show that the Stable Focal-L1 can address the aforementioned issue of Focal-L1~\cite{dang2024singr}. Furthermore, we prove that the Stable Focal-L1 loss is a lower bound of Focal-L1 in~\cref{prop:lowerbound}.
\begin{proposition}[Stable Focal-L1]
\label{prop:stable_focall1}
    Let $\ell : [-1,1] \times [-1,1] \rightarrow \mathbb{R}$ be defined by $\ell(x,y)=|y - x||y - x|^{\gamma \mathbbm{I}(y x \geq 0)}$. Given an arbitrary fixed $y_0\in [-1,1]$, we have that $\ell(x,y_0)$ has only one strictly local and global minimum at $x=y_0$.
    \begin{proof}
        In Supplementary Sec. \textcolor{red}{2}.
    \end{proof}
\end{proposition}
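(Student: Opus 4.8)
The plan is to collapse the indicator into an explicit piecewise formula and then reduce the problem to a one-dimensional monotonicity argument on either side of $x=y_0$. Writing $g(x):=\ell(x,y_0)$ and observing that $\mathbbm{I}(y_0 x\ge 0)$ equals $1$ precisely when $x$ and $y_0$ share a sign (or one of them vanishes), I would first record
\begin{equation*}
g(x)=\begin{cases}|y_0-x|^{1+\gamma}, & y_0 x\ge 0,\\ |y_0-x|, & y_0 x<0.\end{cases}
\end{equation*}
Since $g(x)\ge 0$ everywhere and $g(x)=0$ if and only if $x=y_0$, the point $x=y_0$ is immediately the unique \emph{global} minimizer and a strict local minimum. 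The real content of the proposition is therefore that $x=y_0$ is the \emph{only} local minimum, i.e. that the sign-dependent exponent does not manufacture a spurious second basin.

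To organize the cases I would exploit the symmetry $\ell(-x,-y_0)=\ell(x,y_0)$, which follows by substituting $-x,-y_0$ into the definition and using $|{-}y_0+x|=|y_0-x|$ together with $(-y_0)(-x)=y_0x$; this lets me assume $y_0\ge 0$ without loss of generality. The case $y_0=0$ is immediate, since then $g(x)=|x|^{1+\gamma}$ is strictly unimodal with its single minimum at $0=y_0$. For $y_0>0$ the condition $y_0x\ge 0$ is equivalent to $x\ge 0$, so $g$ splits into two pieces: on $[0,1]$ it equals $|y_0-x|^{1+\gamma}$, which (because $t\mapsto t^{1+\gamma}$ is strictly increasing on $[0,\infty)$ for $\gamma>0$) is strictly decreasing on $[0,y_0]$ and strictly increasing on $[y_0,1]$; on $[-1,0)$ it equals $y_0-x$, which is strictly decreasing with slope $-1$.

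It remains to glue the two pieces at the sign boundary $x=0$, and this is where I expect the only real subtlety. Because $g$ may be \emph{discontinuous} at $x=0$, I cannot simply differentiate across it; instead I would show the jump is downward, so that $x=0$ is not a local minimum even though $g$ decreases into it from the left. Concretely, for any $x\in[-1,0)$ one has $g(x)=y_0-x>y_0\ge y_0^{1+\gamma}=g(0)$, where the inequality $y_0^{1+\gamma}\le y_0$ holds precisely because $y_0\in[0,1]$ and $1+\gamma>1$ (and is strict for $y_0\in(0,1)$). Combining this with the piecewise monotonicity shows that $g$ is strictly decreasing on the whole of $[-1,y_0]$ and strictly increasing on $[y_0,1]$, whence $x=y_0$ is the unique local, and hence global, minimum. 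The hard part is exactly this last inequality at the exponent switch: it is what the removal of the $\max(|\bty_i|,|\mathbf{f}_i|)$ denominator buys us, since that denominator is what reintroduced the competing minimum in the original Focal-L1 loss.
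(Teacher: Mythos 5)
Your proof is correct and follows essentially the same route as the paper's: the same piecewise rewriting of $\ell(x,y_0)$, the same symmetry reduction to $y_0\ge 0$, and the same key inequality $y_0-x>y_0\ge y_0^{1+\gamma}$ showing that the jump at the sign boundary $x=0$ is downward, so no spurious minimum survives on $[-1,0)$. The only (minor) difference is that you exclude additional local minima on $[0,1]$ via strict unimodality of $|y_0-x|^{1+\gamma}$ instead of the paper's second-derivative convexity computation, which is a slightly cleaner way to establish the same point.
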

\begin{proposition}[Stable Focal-L1 as a lower bound of Focal-L1]
    \label{prop:lowerbound}
    $\mathcal{L}^{S}_{\mathrm{FocalL1}} (\mathbf{f}, \bty) \leq \mathcal{L}_{\mathrm{FocalL1}} (\mathbf{f}, \bty), \forall \mathbf{f}, \bty \in [-1, 1]^D$.
    \begin{proof}
        In Supplementary Sec. \textcolor{red}{3}.
    \end{proof}
\end{proposition}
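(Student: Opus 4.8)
The plan is to prove the inequality \emph{pointwise} over the pixel index $i \in \Omega$ and then sum. Since both $\mathcal{L}^{S}_{\mathrm{FocalL1}}(\mathbf{f}, \bty)$ and $\mathcal{L}_{\mathrm{FocalL1}}(\mathbf{f}, \bty)$ have the identical prefactor $\frac{1}{|\Omega|}$ in front of a sum over $\Omega$, it suffices to show that the stable summand is at most the original summand for each $i$; nonnegative term-by-term domination is preserved by summation and by dividing by $|\Omega|$.

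First I would factor out the common nonnegative quantity. Introducing the shorthand $e_i = |\bty_i - \mathbf{f}_i| \geq 0$ and $p_i = \gamma\,\mathbbm{I}(\bty_i \mathbf{f}_i \geq 0) \geq 0$, the stable summand equals $e_i \cdot e_i^{p_i} = e_i^{1+p_i} \geq 0$, while the original summand equals $e_i^{1+p_i} / \max(|\bty_i|, |\mathbf{f}_i|)$. Hence the per-pixel claim reduces to
\begin{equation*}
    e_i^{1+p_i} \;\leq\; \frac{e_i^{1+p_i}}{\max(|\bty_i|, |\mathbf{f}_i|)} .
\end{equation*}
Next I would invoke the hypercube constraint: because $\bty_i, \mathbf{f}_i \in [-1,1]$, we have $|\bty_i| \leq 1$ and $|\mathbf{f}_i| \leq 1$, so $\max(|\bty_i|, |\mathbf{f}_i|) \leq 1$. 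Whenever this maximum is strictly positive, $1/\max(|\bty_i|, |\mathbf{f}_i|) \geq 1$, and multiplying the nonnegative factor $e_i^{1+p_i}$ by a number at least $1$ can only increase it, which is exactly the desired inequality.

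The only subtle point — the step I expect to need care — is the degenerate case $\max(|\bty_i|, |\mathbf{f}_i|) = 0$, i.e.\ $\bty_i = \mathbf{f}_i = 0$, where the original summand is the indeterminate form $0/0$. I would dispose of it by the natural convention that this term is $0$, justified by continuity: in this case $\bty_i \mathbf{f}_i = 0 \geq 0$ forces $p_i = \gamma > 0$, and using $e_i \leq |\bty_i| + |\mathbf{f}_i| \leq 2\max(|\bty_i|, |\mathbf{f}_i|)$ one gets $e_i^{1+p_i}/\max(|\bty_i|, |\mathbf{f}_i|) \leq 2\, e_i^{p_i} \to 0$ as $(\bty_i, \mathbf{f}_i) \to (0,0)$. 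The stable summand is likewise $e_i^{1+p_i} = 0$, so the inequality $0 \leq 0$ holds trivially.

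With both the generic case ($\max(|\bty_i|, |\mathbf{f}_i|) > 0$) and the degenerate case ($\bty_i = \mathbf{f}_i = 0$) covered, the per-pixel domination holds for every $i \in \Omega$. Summing over $\Omega$ and dividing by $|\Omega|$ then yields $\mathcal{L}^{S}_{\mathrm{FocalL1}}(\mathbf{f}, \bty) \leq \mathcal{L}_{\mathrm{FocalL1}}(\mathbf{f}, \bty)$ for all $\mathbf{f}, \bty \in [-1,1]^D$, as claimed.
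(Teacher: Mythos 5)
Your proposal is correct and follows essentially the same route as the paper's proof: pointwise domination of the stable summand by the original one via $\max(|\bty_i|,|\mathbf{f}_i|)\leq 1$ on $[-1,1]^D$, followed by summation. Your explicit treatment of the degenerate case $\bty_i=\mathbf{f}_i=0$, where the original loss is formally $0/0$, is a welcome extra bit of rigor that the paper's proof silently skips, but it does not change the argument.
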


\section{Experiments}

\subsection{Experimental Setup}
\paragraph{Datasets} 

We conducted our experiments on five distinct retinal datasets: FIVES, DRIVE, STARE, CHASE-DB1, and HRF, each offering unique characteristics that contribute to a comprehensive evaluation.

FIVES~\cite{jin2022fives} stands out among them for having significantly more samples. The well-structured and sizeable FIVES dataset provides a solid foundation for training and testing, with an official data split of $600$ samples allocated for training and $200$ for testing. In contrast, the other four datasets—DRIVE~\cite{staal2004ridge}, STARE~\cite{hoover2000locating}, CHASE-DB1, and HRF~\cite{budai2013robust}—contain relatively fewer samples, with DRIVE having $60$, STARE $20$, CHASE-DB1 $28$, and HRF $45$ samples, respectively.

Another noteworthy aspect of these datasets is their variation in image resolution. This diversity in image sizes poses both challenges and opportunities for the development and testing of robust image processing algorithms. The image dimensions for FIVES, DRIVE, STARE, CHASE-DB1, and HRF are $2048\times2048$, $584\times565$, $605\times700$, $960\times999$, and $2336\times3504$, respectively. Leveraging this range of resolutions tested our methods' adaptability to different scalabilities and ensured their generalizability across various retinal imaging contexts.

\paragraph{Training and evaluation protocols.} 

As the FIVES dataset is the largest among the available datasets, boasting at least $13$ times more samples than each of the other four datasets, we leveraged data exclusively from FIVES for our training purposes. In contrast, the other datasets were treated as external test sets to ensure an independent evaluation. Specifically, we performed model selection by utilizing $600$ samples from the official FIVES training data split. The remaining $200$ samples, in conjunction with $153$ samples drawn from the other datasets, were set aside for independent testing. This approach allowed us to comprehensively evaluate the generalizability and robustness of our models.

We explored two distinct input settings to better understand the model performance across varying image resolutions: low-resolution (LR) and high-resolution (HR). For the low-resolution setting, the entire RIs were resized to dimensions of $512\times512$ pixels. Conversely, in the high-resolution setting, patches were randomly cropped from the high-resolution RIs. These patches were then resized to a uniform size of $512\times512$ pixels. Hence, the LR and HR settings are referred to as ``full-image'' and ``patch-based'' approaches, respectively.

For the methodology adopted in our experiments, we prioritized the efficient full-image setting (LR). This decision was based on several considerations such as computational efficiency and the ability to capture global image context. By employing this strategy, we aimed to strike a balance between performance and resource utilization.

\paragraph{Baselines}
We compared our method to a diverse set of state-of-the-art references, both general and specific to RIs. They included UNet~\cite{ronneberger2015u}, UNet++~\cite{zhou2019unet++}, CE-NET~\cite{gu2019net}, CTF-Net~\cite{wang2020ctf}, DUNet~\cite{wang2019dual}, FR-UNet~\cite{liu2022full}, IterNet~\cite{li2020iternet}, MAGF-Net~\cite{li2023magf}, Swin-UNet~\cite{cao2022swin}, D2SF~\cite{qiu2023rethinking}, and DA-Net~\cite{wang2022net}. 
Among these, IterNet, FR-UNet, DUNet, CE-Net, CTF-Net, and MAGF-Net were specifically HR-based baselines.
In addition, we incorporated soft-label-based baselines such as label smoothing (LS)~\cite{silva2021using}, boundary LS (BLS)~\cite{wang2023jaccard}, and Geodesic LS (GeoLS)~\cite{vasudeva2023geols}. 

\begin{figure}[t]
    \centering
    \croppdf{figures/SAUNAR_improvements}
    \includegraphics[width=0.47\textwidth]{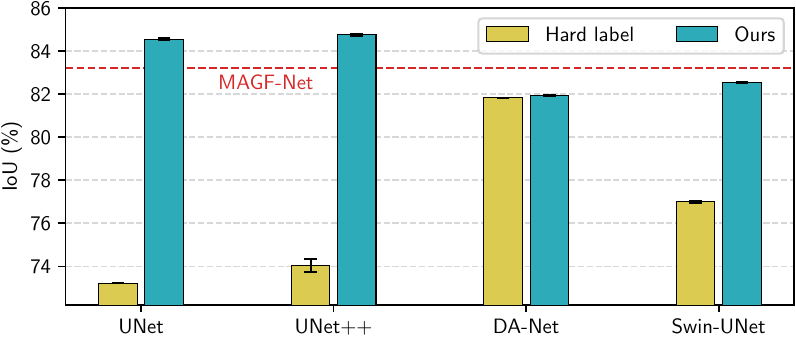}
        \caption{Performance gains of different DL architectures utilizing our approach with LR inputs. The dashed red line indicates the performance of the most competitive HR-based baseline, MAGF-Net~\cite{li2023magf}.}
    \label{fig:perf_improvement}
\end{figure}

\paragraph{Implementation details.}
We conducted our experiments on Nvidia V100 GPUs. Our method and baselines were implemented in Pytorch. 
We applied our method to four LR-based segmentation models: UNet, UNet++, Swin-UNet, and DA-Net.
The soft-label baselines also used the UNet++ network. We ensured that our method and baselines were trained using the same data preprocessing and augmentation pipeline.
During training, we applied data augmentation using random flipping, rotation, color jittering, gamma correction, Gaussian noises, and cutout. Finally, we normalized the images with a mean of $[0.07, 0.15, 0.34]$ and a standard deviation of $[0.2, 0.3, 0.4]$, calculated from the training set of FIVES. 

We used the Adam optimizer to train our method with an initial learning rate of $1e\mathrm{-}4$ and a batch size of $4$. We employed $0$ as the threshold to binarize the SAUNA maps.
While our method, as well as other full-image approaches, took $300$ epochs to train, we spent only $20$ epochs to train patch-based methods due to the enormous number of cropped patches (i.e.\ $800$ patches per RI). 

Each method was re-trained $5$ times with different random seeds. For each random seed, we performed the $5$-fold cross-validation strategy for model selection. The prediction on each test input image was the average of the outputs from $5$ best models in each fold to reduce the effects of random data splitting on our results. 

\paragraph{Evaluation metrics}
For performance assessments, we adopted Dice score, intersection-over-union (IoU), sensitivity (Sens), specificity (Spec), and balanced accuracy (BA; an average of Sens and Spec). We reported image-wise means and standard errors (SE) of test metrics over $5$ runs.

\begin{figure}[t]
    \centering
    \croppdf{figures/SAUNAR_inout_comp_FIVES}
    \includegraphics[width=\linewidth]{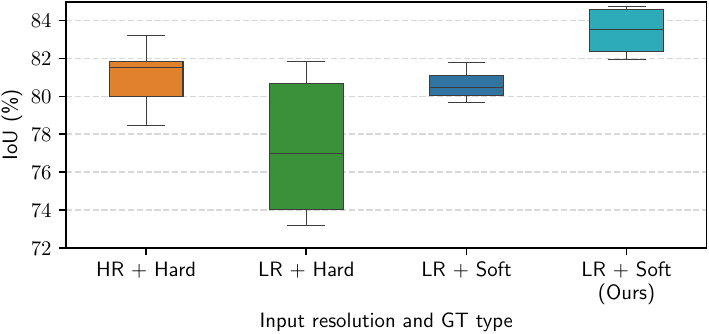}
    \caption{Performance comparison between different groups of methods on FIVES}
    \label{fig:fives_group_comparison}
\end{figure}

\begin{figure}[t]
    \centering
    \croppdf{figures/SAUNAR_inout_comp_STARE}    
    \croppdf{figures/SAUNAR_inout_comp_DRIVE} 
    \croppdf{figures/SAUNAR_inout_comp_CHASEDB1} 
    \croppdf{figures/SAUNAR_inout_comp_HRF} 
    \begin{subfigure}{0.48\linewidth}
        \includegraphics[width=\linewidth]{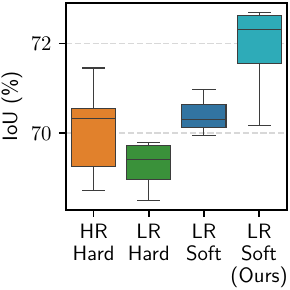}
        \caption{STARE}
    \end{subfigure} \hfill
    \begin{subfigure}{0.48\linewidth}
        \includegraphics[width=\linewidth]{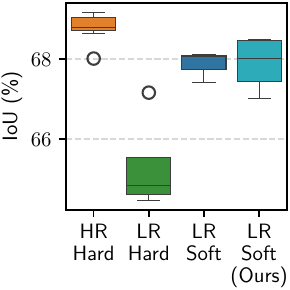}
        \caption{DRIVE}
    \end{subfigure}
    \newline
    \begin{subfigure}{0.48\linewidth}
        \includegraphics[width=\linewidth]{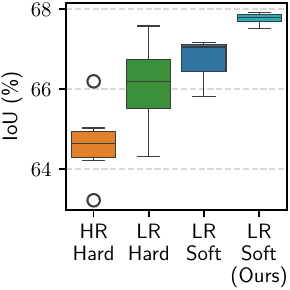}
        \caption{CHASEDB1}
    \end{subfigure} \hfill
    \begin{subfigure}{0.48\linewidth}
        \includegraphics[width=\linewidth]{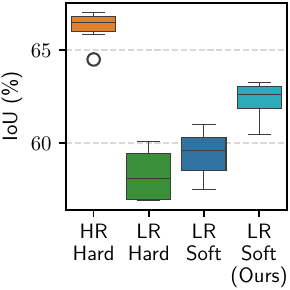}
        \caption{HRF}
    \end{subfigure}
    \caption{Performance comparison between different groups of methods on the four external datasets}
    \label{fig:others_group_comparison}
\end{figure}

\subsection{Results}

\paragraph{FIVES dataset.}
We present the graphical illustrations in~\cref{fig:hr_lr_comparison} and the quantitative results in~\cref{tab:exp_arch_comparisons,fig:perf_improvement,fig:fives_group_comparison}. In general, as shown in~\cref{fig:hr_lr_comparison}, HR-based baselines performed substantially better than most of the LR-based counterparts, albeit with significant throughput trade-offs. The aggregated results in~\cref{fig:fives_group_comparison} indicate that the combination of LR images and hard labels was the less effective. Utilizing soft labels with LR images resulted in improved performance, though it still lagged behind the approach using HR images with hard labels. Notably, our soft-label-based approach with LR outperformed the expensive combination of HR images and hard labels.

Among all the baselines, MAGF-Net~\cite{li2023magf} attained the highest Dice and IoU scores. Our method, utilizing UNet and UNet++, not only surpassed this baseline across all metrics but was also $208$ and $152$ times more computationally efficient, respectively.
Additionally, compared to DA-Net~\cite{wang2022net}, the best LR-based baseline using hard labels, our method with UNet++ achieved substantial improvements of $2.93\%$ in IoU, $1.77\%$ in Dice, and $1.0\%$ in BA. 

The results in~\cref{fig:perf_improvement} demonstrate that the combination of the SAUNA transform and GJML led to performance gains over all four models. Particularly, UNet, UNet++, and Swin-UNet achieved significant improvements of $11.33\%$ $10.7\%$ and $5.52\%$ in IoU, respectively. Furthermore, among the baselines using soft labels, the combination of BLS and JML~\cite{wang2023jaccard} yielded the best performance. When compared to that baseline, our method with UNet++ performed $2.98\%$, $1.75\%$, and $0.49\%$ better in IoU, Dice, and BA, respectively. We visualize the qualitative  results in~\Cref{fig:predictions_viz}.


\begin{table*}[t]
    \centering
    \renewcommand{\arraystretch}{1.1}
    \caption{Performance comparisons between our method (highlighted in \colorbox{LightCyan}{cyan}) and a diverse set of baselines on the FIVES test set. The best results are highlighted in bold. ``IN'' indicates either HR or LR-based approaches. All baseline methods were retrained on our data split for fair comparison.}
    \resizebox{\textwidth}{!}{
    \begin{tabular}{|l|c|c|c|c|c|c|c|c|c|}
\hline
\textbf{Method} &  \textbf{IN} & \textbf{GT} & \textbf{Loss} & \textbf{Imgs/s} & \textbf{IoU} & \textbf{Dice} & \textbf{Sens} & \textbf{Spec} & \textbf{BA} \\
\hline \hline
IterNet~\cite{li2020iternet} & \multirow{8}{*}{\rotatebox[origin=c]{90}{High resolution}} & \multirow{13}{*}{\rotatebox[origin=c]{90}{Hard labels}}
& \multirow{13}{*}{\rotatebox[origin=c]{90}{Dice + BCE}} & 0.67 & 66.90$_{\pm0.53}$ & 78.52$_{\pm0.34}$ & 76.06$_{\pm0.58}$ & 98.90$_{\pm0.01}$ & 87.48$_{\pm0.29}$ \\
FR-UNet~\cite{liu2022full} & & & & 0.34 & 78.44$_{\pm0.29}$ & 87.17$_{\pm0.20}$ & 86.79$_{\pm0.56}$ & 99.14$_{\pm0.04}$ & 92.96$_{\pm0.26}$ \\
DUNet~\cite{wang2019dual} & & & & 0.24 & 80.49$_{\pm0.25}$ & 88.55$_{\pm0.16}$ & 90.02$_{\pm0.39}$ & 99.07$_{\pm0.02}$  & 94.54$_{\pm0.19}$ \\
CE-Net~\cite{gu2019net} & & & & 0.59 & 81.40$_{\pm0.13}$ & 89.14$_{\pm0.08}$ & 90.75$_{\pm0.24}$ & 99.13$_{\pm0.01}$ & 94.94$_{\pm0.12}$ \\
UNet++~\cite{zhou2019unet++} & & & & 1.27 & 81.70$_{\pm0.16}$ & 89.25$_{\pm0.10}$ & 89.38$_{\pm0.26}$ & 99.25$_{\pm0.01}$ & 94.32$_{\pm0.12}$ \\
UNet~\cite{ronneberger2015u} & & & & 3.37 & 81.84$_{\pm0.16}$ & 89.39$_{\pm0.10}$ & 90.94$_{\pm0.26}$ & 99.12$_{\pm0.01}$ & 95.03$_{\pm0.12}$ \\
CTF-Net~\cite{wang2020ctf} & & & & 1.74 & 81.85$_{\pm0.16}$ & 89.51$_{\pm0.10}$ & 90.33$_{\pm0.26}$ & 99.22$_{\pm0.01}$ & 94.78$_{\pm0.12}$ \\
MAGF-Net~\cite{li2023magf} & & & & 0.20 & 83.21$_{\pm0.16}$ & 90.23$_{\pm0.10}$ & 90.71$_{\pm0.26}$ & 99.30$_{\pm0.01}$ & 95.01$_{\pm0.12}$ \\
\cline{1-2} \cline{5-10}
UNet~\cite{ronneberger2015u} &  & & & 38.89 & 73.21$_{\pm0.18}$ & 84.15$_{\pm0.12}$ & 84.08$_{\pm0.23}$ & 98.86$_{\pm0.01}$ & 91.47$_{\pm0.11}$ \\
UNet++~\cite{zhou2019unet++} & & & & 30.45 & 74.05$_{\pm0.31}$ & 84.69$_{\pm0.20}$ & 84.45$_{\pm0.36}$ & 98.92$_{\pm0.03}$ & 91.69$_{\pm0.18}$ \\
Swin-UNet~\cite{cao2022swin} & & & & 49.52 & 77.00$_{\pm0.04}$ & 86.52$_{\pm0.03}$ & 88.12$_{\pm0.10}$ & 98.93$_{\pm0.01}$ & 93.53$_{\pm0.04}$ \\
D2SF~\cite{qiu2023rethinking} & & & & - & 80.68$_{\pm0.20}$ & 89.30$_{\pm0.12}$ & 86.52$_{\pm0.24}$ & \subbest{99.44$_{\pm0.03}$} & 92.98$_{\pm0.12}$ \\
DA-Net~\cite{wang2022net}& & & & 37.31 & 81.82$_{\pm0.05}$ & 89.41$_{\pm0.03}$ & 88.96$_{\pm0.08}$ & 99.34$_{\pm0.01}$ & 94.15$_{\pm0.04}$ \\
\cline{1-1} \cline{3-10}
GeoLS~\cite{vasudeva2023geols,wang2023jaccard} & &  & \multirow{3}{*}{JML} & 30.45 & 79.66$_{\pm0.25}$ & 88.12$_{\pm0.14}$ & \subbest{91.08$_{\pm0.39}$} & 98.91$_{\pm0.07}$ & 94.99$_{\pm0.16}$ \\
LS~\cite{szegedy2016rethinking,wang2023jaccard} & &  &  & 30.45 & 80.47$_{\pm0.27}$ & 88.57$_{\pm0.17}$ & 88.14$_{\pm1.04}$ & 99.27$_{\pm0.09}$ & 93.70$_{\pm0.48}$ \\
BLS~\cite{wang2023jaccard} & & &  & 30.45 & 81.77$_{\pm0.07}$ & 89.43$_{\pm0.04}$ & 90.09$_{\pm0.29}$ & 99.23$_{\pm0.03}$ & 94.66$_{\pm0.13}$ \\
\cline{1-1} \cline{4-10}
\rowcolor{LightCyan}
Ours (DA-Net) &    &    &      & 35.90 & 81.95$_{\pm0.03}$ & 89.52$_{\pm0.02}$ & 88.50$_{\pm0.11}$ & 99.39$_{\pm0.01}$ & 93.95$_{\pm0.05}$ \\
\rowcolor{LightCyan}
Ours (Swin-UNet) &    &    &   GJML +    & 49.52 & 82.52$_{\pm0.03}$ & 89.87$_{\pm0.02}$ & 89.36$_{\pm0.06}$ & 99.39$_{\pm0.01}$ & 94.37$_{\pm0.03}$ \\
   \rowcolor{LightCyan}
     Ours (UNet) &    &    &   SF-L1   & 41.67 & 84.54$_{\pm0.05}$ & 91.04$_{\pm0.05}$ & 90.81$_{\pm0.14}$ & 99.44$_{\pm0.01}$ & 95.13$_{\pm0.06}$ \\
     \rowcolor{LightCyan}
     Ours (UNet++) &  \multirow{-12}{*}{\rotatebox[origin=c]{90}{Low resolution}}  &  \multirow{-7}{*}{\rotatebox[origin=c]{90}{Soft labels}}  &   & 30.45 & \subbest{84.75$_{\pm0.04}$} & \subbest{91.18$_{\pm0.03}$} & 90.85$_{\pm0.11}$ & \subbest{99.46$_{\pm0.01}$} & \subbest{95.15$_{\pm0.05}$} \\
 \hline
    \end{tabular}
    }
    \label{tab:exp_arch_comparisons}
\end{table*}
\paragraph{Generalization to other four datasets.}
In \cref{fig:others_group_comparison}, we present the results of four different method groups, categorized based on their input image resolution (LR or HR) and target type (hard or soft labels). Compared to the baseline group that uses LR images and hard labels, our group demonstrates significantly superior performance on all four datasets. Between the two baseline groups using LR images, those utilizing soft labels show more improvements than those with hard labels. Within the two groups employing soft labels, our group substantially outperformed the baseline group on STARE, CHASEDB1, and HRF.
Moreover, the combination of HR images and hard labels with its computational advance exhibited its strength on DRIVE and HRF. However, our approach generalized substantially better on STARE and CHASEDB1. Interestingly, this expensive setting was the less effective on the CHASEDB1.

In~\cref{tab:external_results}, we present quantitative results of LR-based methods on the four datasets. Generally, all four of our settings consistently outperformed their respective baselines. The most competitive baseline was a soft-label-based method, BLS~\cite{wang2023jaccard}. Compared to this reference, our method with UNet++ results in IoU gains of $1.7\%$, $0.4\%$, $0.5\%$, and $2.3\%$ on STARE, DRIVE, CHASEDB1, and HRF, respectively. 

\paragraph{Ablation study.}
We examined the impact of the signed distance transform and thickness information in~\cref{tab:exp_ablation_sauna}.
The vessel thickness on its own achieved poor segmentation quality, while the signed distance transform could deliver competitive performance. However, when the thickness was used to enrich the signed distance transform, the IoU was improved by $1.88\%$, which is $23.5$ times the standard error.

In~\cref{tab:exp_ablation_losses}, we analyzed the contributions of the GJML and the stable Focal-L1 loss in our method. Our findings indicate that both the stable Focal-L1 loss and GJML individually outperformed the Focal-L1 loss used in~\cite{dang2024singr}. Specifically, the removal of GJML and stable Focal-L1 resulted in IoU drops of $0.32\%$ and $0.39\%$, respectively, which correspond to $8$ and $9.75$ times the standard error.



\begin{figure*}[tbp]
    \centering
    \croppdf{figures/viz/sample0}
    \croppdf{figures/viz/sample1}
    \croppdf{figures/viz/sample2}
    \croppdf{figures/viz/sample_ours}
    \croppdf{figures/viz/sample_magf}
    \croppdf{figures/viz/sample_danet}
    \croppdf{figures/viz/sample_geols}
    \includegraphics[width=0.135\textwidth]{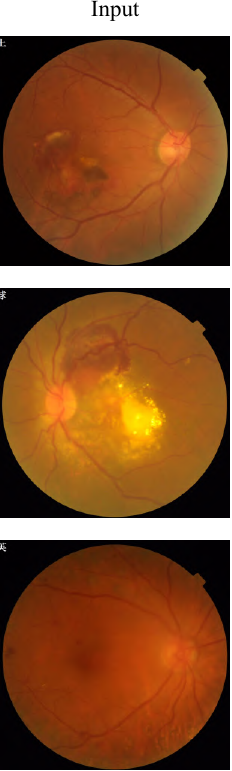} 
    \includegraphics[width=0.135\textwidth]{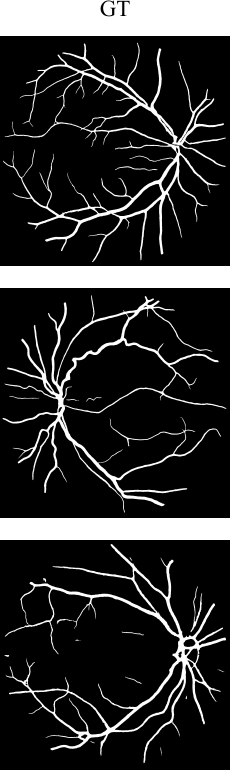} 
    \includegraphics[width=0.135\textwidth]{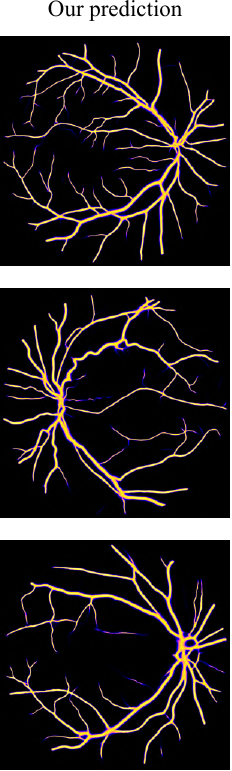} 
    \includegraphics[width=0.135\textwidth]{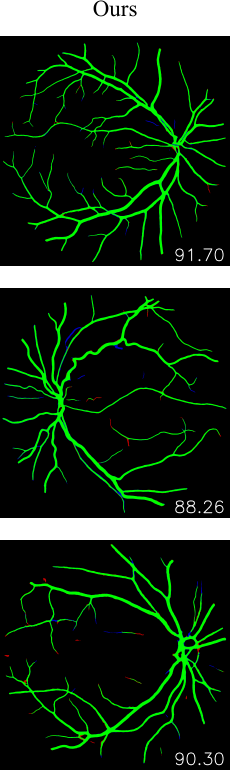} 
    \includegraphics[width=0.135\textwidth]{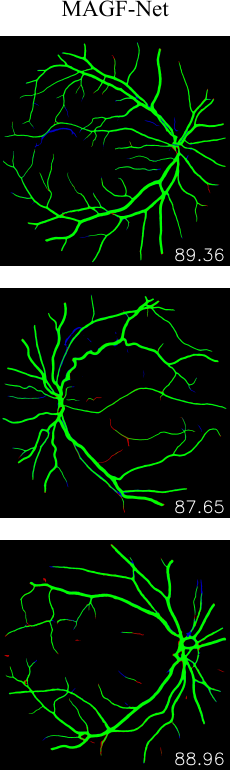} 
    \includegraphics[width=0.135\textwidth]{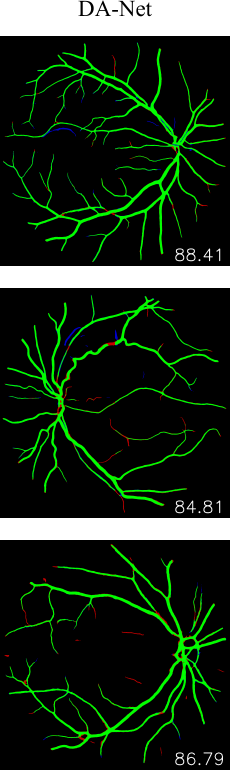} 
    \includegraphics[width=0.135\textwidth]{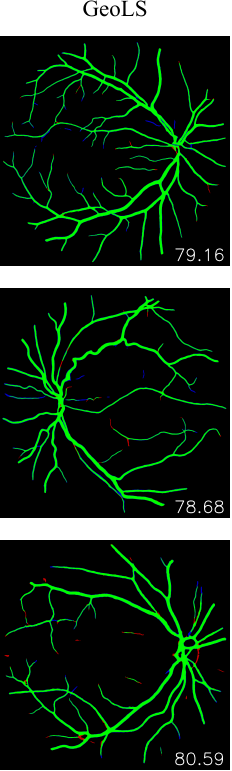} 
    \caption{Visualization of predictions of our method and the baselines on the test sets. HR indicates the method using high-resolution input images. The overlaid values are IoU scores. Green, black, blue, and red pixels indicate true positive, true negative, false positive, and false negative, respectively.}
    \label{fig:predictions_viz}
\end{figure*}

\begin{table}[t]
    \centering
    \renewcommand{\arraystretch}{1.2}
    \caption{Generalization comparisons on the 4 external test sets (IoU means and SEs over 5 runs). Our method is marked in \colorbox{LightCyan}{cyan}. The best results are highlighted in bold.}
    \resizebox{0.48\textwidth}{!}{
    \begin{tabular}{|l|c|c|c|c|c|}
\hline
    \textbf{Method} &   \textbf{GT}  &       \textbf{STARE} &           \textbf{DRIVE} &        \textbf{CHASEDB1} &             \textbf{HRF} \\       
\hline \hline
     UNet~\cite{ronneberger2015u} & \multirow{4}{*}{\rotatebox[origin=c]{90}{Hard labels}} &69.1$_{\pm0.2}$ & 64.5$_{\pm0.2}$ & 65.9$_{\pm0.2}$ & 57.0$_{\pm0.3}$ \\
   UNet++~\cite{zhou2019unet++} & &69.8$_{\pm0.2}$ & 64.7$_{\pm0.4}$ & 66.5$_{\pm0.1}$ & 56.9$_{\pm0.5}$ \\
   DA-Net~\cite{wang2022net} & &69.7$_{\pm0.2}$ & 67.2$_{\pm0.1}$ & 67.6$_{\pm0.1}$ & 60.1$_{\pm0.1}$ \\
Swin-UNet~\cite{cao2022swin} & &68.5$_{\pm0.1}$ & 65.0$_{\pm0.1}$ & 64.3$_{\pm0.1}$ & 59.2$_{\pm0.1}$ \\
\hline
GeoLS~\cite{vasudeva2023geols,wang2023jaccard} & &69.9$_{\pm0.1}$ & 68.1$_{\pm0.1}$ & 65.8$_{\pm0.2}$ & 57.5$_{\pm0.2}$ \\
LS~\cite{szegedy2016rethinking,wang2023jaccard} & &70.3$_{\pm0.3}$ & 67.4$_{\pm0.4}$ & 67.0$_{\pm0.4}$ & 59.6$_{\pm0.3}$ \\
BLS~\cite{wang2023jaccard} & &71.0$_{\pm0.1}$ & 68.1$_{\pm0.1}$ & 67.2$_{\pm0.1}$ & 61.0$_{\pm0.2}$ \\
\rowcolor{LightCyan}
Ours (DA-Net) & & 70.2$_{\pm0.2}$ & 67.6$_{\pm0.0}$ & \subbest{67.9$_{\pm0.1}$} & 60.5$_{\pm0.1}$ \\
\rowcolor{LightCyan}
Ours (Swin-UNet) & & 72.0$_{\pm0.0}$ & 67.0$_{\pm0.0}$ & \subbest{67.9$_{\pm0.0}$} & 62.3$_{\pm0.1}$ \\
\rowcolor{LightCyan}
     Ours (UNet) & & 72.6$_{\pm0.0}$ & \subbest{68.5$_{\pm0.1}$} & 67.5$_{\pm0.0}$ & 62.9$_{\pm0.1}$ \\
     \rowcolor{LightCyan}
   Ours (UNet++) & \multirow{-7}{*}{\rotatebox[origin=c]{90}{Soft labels}} & \subbest{72.7$_{\pm0.1}$} & \subbest{68.5$_{\pm0.1}$} & 67.7$_{\pm0.1}$ & \subbest{63.3$_{\pm0.1}$} \\
\hline
\end{tabular}    
}
    \label{tab:external_results}
\end{table}

\section{Conclusion}
In this work, we have presented a regression-based approach to RV segmentation. We utilized the newly developed SAUNA transform to generate soft labels, motivated by the uncertainty in the annotation process. We leveraged the Jaccard metric loss~\cite{wang2023jaccard} and proved that it is a semi-metric loss on arbitrary hypercubes. In addition, we propose a stable version of the Focal-L1 loss~\cite{dang2024singr}, which 
directly addresses the challenges posed by the unexpected local minima encountered with the original Focal-L1 loss.
Through rigorous experimental evaluation, we showed that our method outperforms existing methods using either LR or HR input images on an in-domain test set (i.e.\ FIVES), and generalizes better compared to LR-based references on external datasets.

\begin{table}[t]
    \centering
    \caption{Ablation study on SAUNA's components and the proposed losses}
    \subfloat[SAUNA\label{tab:exp_ablation_sauna}]{
    \scalebox{0.843}{
        \begin{tabular}{|l|c|}
\hline
\textbf{Setting} & \textbf{IoU} \\
\hline \hline
SAUNA & \subbest{84.75$_{\pm0.04}$} \\
\quad without $\by^T$ & 82.87$_{\pm0.08}$  \\
\quad without $\by^B$ & 72.34$_{\pm3.32}$ \\
\hline
    \end{tabular} 
    }
    }
    \hfill
    \subfloat[Losses\label{tab:exp_ablation_losses}]{
    \scalebox{0.843}{
    \begin{tabular}{|l|c|}
\hline
\textbf{Setting} & \textbf{IoU}  \\
\hline \hline
GJML + SF-L1 & \subbest{84.75$_{\pm0.04}$}  \\
\quad Only SF-L1 & 84.43$_{\pm0.04}$ \\
\quad Only GJML & 84.36$_{\pm0.03}$  \\
\hline
Focal-L1~\cite{dang2024singr} & 84.01$_{\pm0.14}$ \\
\hline
    \end{tabular}  
    }
    }
\end{table}


\newpage
%
%
%
\bibliographystyle{splncs04}
\bibliography{ref}
\clearpage


\maketitle
\newtheorem{lemma}[theorem]{Lemma}
\renewcommand{\thepage}{S\arabic{page}} 
\renewcommand{\thetable}{S\arabic{table}}  
\renewcommand{\thefigure}{S\arabic{figure}}     

\setcounter{page}{1}
\setcounter{figure}{0}
\setcounter{table}{0}
\setcounter{section}{0}
\setcounter{equation}{0}
\setcounter{proposition}{0}
\section{Proof of~\cref{prop:generalized_JML_metric}}
\label{sc:prop1_proof}
 \begin{proposition}[Jaccard Metric Loss on a hypercube in $\mathbb{R}^D$]
     \label{prop:generalized_JML_metric}     
     $\Delta_{\mathrm{JML}}$ is a semi-metric in $[\alpha, \beta]^{D} \subseteq \mathbb{R}^D$.
Specifically, $\forall \ba, \bb \in [\alpha, \beta]^{D}$, we have
\begin{enumerate}[(i)]
    \item Reflexivity: $\Delta_{\mathrm{JML}}(\ba, \bb) = 0 \Longleftrightarrow \ba \equiv \bb$
    \item Positivity: $\Delta_{\mathrm{JML}}(\ba, \bb) \geq 0$
    \item Symmetry: $\Delta_{\mathrm{JML}}(\ba, \bb) = \Delta_{\mathrm{JML}}(\bb, \ba)$
\end{enumerate}
\begin{proof}    
For any $\mathbf{a}, \mathbf{b} \in [\alpha, \beta]^D$, JML is defined in~\cite{wang2023jaccard} as
 \begin{align}
     \Delta_{\mathrm{JML}}(\mathbf{a},\mathbf{b})= 1 - \frac{\left \| \mathbf{a} + \mathbf{b}\right \|_1 - \left \| \mathbf{a} - \mathbf{b}\right \|_1}{ \left \| \mathbf{a} + \mathbf{b}\right \|_1 + \left \| \mathbf{a} - \mathbf{b}\right \|_1}.
 \end{align}

\paragraph{(i) Reflexivity.}

If $\Delta_{\mathrm{JML}}(\mathbf{a},\mathbf{b}) = 0$, we can derive $\| \mathbf{a} - \mathbf{b} \|_1 = \sum_{i=1}^D |\ba_i - \bb_i| = 0$. Thus, we have $\ba_i = \bb_i, \forall i=1..D$, which is equivalent to $\ba \equiv \bb$. 

If $\ba \equiv \bb$, we obviously have $\Delta_{\mathrm{JML}}(\mathbf{a},\mathbf{b}) = 0$.

\paragraph{(ii) Positivity.} The property is satisfied because we can rewrite $\Delta_{\mathrm{JML}}$ as follows
\begin{align}
    \Delta_{\mathrm{JML}}(\mathbf{a},\mathbf{b}) = \frac{ 2\left \| \mathbf{a} - \mathbf{b}\right \|_1}{ \left \| \mathbf{a} + \mathbf{b}\right \|_1 + \left \| \mathbf{a} - \mathbf{b}\right \|_1} \geq 0, \forall \mathbf{a}, \mathbf{b} \in [\alpha, \beta]^D
\end{align}

\paragraph{(iii) Symmetry.} As $\| \mathbf{a} + \mathbf{b} \|_1 = \| \mathbf{b} + \mathbf{a} \|_1$ and $\| \mathbf{a} - \mathbf{b} \|_1 = \| \mathbf{b} - \mathbf{a} \|_1, \forall \mathbf{a}, \mathbf{b} \in [\alpha, \beta]^D$, we obviously have $\Delta_{\mathrm{JML}}(\ba, \bb) = \Delta_{\mathrm{JML}}(\bb, \ba)$ and this concludes the proof.
\end{proof}
\end{proposition}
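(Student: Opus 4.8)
The plan is to reduce all three claims to a single algebraic rewriting of $\Delta_{\mathrm{JML}}$ together with elementary properties of the $\ell_1$ norm, observing along the way that the hypercube bounds $[\alpha,\beta]$ play no essential role beyond guaranteeing the expression is well defined. First I would place the two terms of the definition over the common denominator $\|\ba+\bb\|_1 + \|\ba-\bb\|_1$, yielding
\begin{align}
1 - \frac{\|\ba+\bb\|_1 - \|\ba-\bb\|_1}{\|\ba+\bb\|_1 + \|\ba-\bb\|_1} = \frac{2\|\ba-\bb\|_1}{\|\ba+\bb\|_1 + \|\ba-\bb\|_1}.
\end{align}
Everything else hinges on this closed form. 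From it, positivity (ii) is immediate: the numerator $2\|\ba-\bb\|_1$ and the denominator are both sums of absolute values, hence nonnegative, so the ratio is $\geq 0$ wherever it is defined.

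Reflexivity (i) then follows by inspecting when the ratio vanishes. The denominator is strictly positive except at the single point $\ba=\bb=\mathbf{0}$ (discussed below), so $\Delta_{\mathrm{JML}}(\ba,\bb)=0$ holds iff the numerator vanishes, i.e. $\|\ba-\bb\|_1 = \sum_{i=1}^D |\ba_i - \bb_i| = 0$. Since a sum of nonnegative terms is zero iff each term is zero, this is equivalent to $\ba_i = \bb_i$ for all $i$, that is $\ba \equiv \bb$; the converse direction is trivial by substitution. Symmetry (iii) is the easiest step: $\|\ba+\bb\|_1 = \|\bb+\ba\|_1$ and $\|\ba-\bb\|_1 = \|\bb-\ba\|_1$ because $|\ba_i \pm \bb_i| = |\bb_i \pm \ba_i|$ componentwise, so both numerator and denominator are invariant under swapping $\ba$ and $\bb$.

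The one point requiring care — and the only place the choice of hypercube even surfaces — is well-definedness: the denominator $\|\ba+\bb\|_1 + \|\ba-\bb\|_1$ vanishes exactly when $\ba+\bb=\mathbf{0}$ and $\ba-\bb=\mathbf{0}$ simultaneously, i.e. $\ba=\bb=\mathbf{0}$, which can occur only if $0 \in [\alpha,\beta]$. I would dispatch this either by adopting the natural convention $\Delta_{\mathrm{JML}}(\mathbf{0},\mathbf{0})=0$ (consistent with reflexivity) or by treating it as a measure-zero corner case absorbed by a small additive constant in practice. Crucially, none of the three steps above invokes the bounds $0 \le \ba_i,\bb_i \le 1$ used in the original $[0,1]^D$ statement of Wang~\etal~\cite{wang2023jaccard}; each rests solely on the symmetry and nonnegativity of $|\cdot|$, which hold verbatim on any $[\alpha,\beta]^D \subseteq \mathbb{R}^D$. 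This is precisely why the semi-metric structure extends for free — and why the genuinely domain-sensitive obstacle would be the full triangle inequality, which is not claimed here.
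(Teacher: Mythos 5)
Your proof is correct and follows essentially the same route as the paper's: rewriting $\Delta_{\mathrm{JML}}(\ba,\bb)$ as $\frac{2\left\|\ba-\bb\right\|_1}{\left\|\ba+\bb\right\|_1+\left\|\ba-\bb\right\|_1}$ and reading off positivity, reflexivity, and symmetry from elementary properties of the $\ell_1$ norm. Your additional remark on well-definedness at $\ba=\bb=\mathbf{0}$ (when $0\in[\alpha,\beta]$) is a legitimate corner case that the paper's proof silently skips, and your proposed convention $\Delta_{\mathrm{JML}}(\mathbf{0},\mathbf{0})=0$ is the right way to handle it.
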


\section{Proof of~\cref{prop:stable_focall1}}
\label{sc:stable_focall1_proof}

\begin{lemma}
    Let $\ell : [-1,1] \times [-1,1] \rightarrow \mathbb{R}$ be defined by $\ell(x,y)=|y - x||y - x|^{\gamma \mathbbm{I}(y x \geq 0)}$. For any fixed $y_0\in [0,1]$ (or $[-1, 0]$), the function $\ell(x,y_0)$ does not have any local infimum at $x\in [-1, 0)$ (or $(0, 1]$).
\begin{proof}
        As $\ell(x,y_0)$ is symmetric, without loss of generality, we assume that $y_0 \in [0, 1]$. 
        For simplicity, we denote $\ell(x) = \ell(x, y_0), \forall x \in [-1, 1]$. First, we rewrite $\ell(x)$ as
\begin{align}
    \ell(x)=\left\{\begin{matrix}
| x - y_0 |^{\gamma + 1} & xy_0 \ge 0\\ 
| x - y_0 | & \mathrm{otherwise}
\end{matrix}\right.
\end{align}
$\forall x\in[-1, 0)$, the function $\ell$ becomes a decreasing linear function
\begin{equation}
    \ell(x) = y_0 - x
\end{equation}
Therefore, the only potential local infimum is at $x \rightarrow 0^-$. However, we have that
\begin{align}    
     \lim_{x \rightarrow 0^-} \ell(x) &= y_0 &  \\
    &\geq y_0^{\gamma + 1} & \triangleright\ \mathrm{For}\ \gamma \geq 1\ \mathrm{and}\ y_0 \in [0, 1] \\
    &=\lim_{x \rightarrow 0^+} \ell(x) & \triangleright\ y_0>0
\end{align}
If $y_0 \neq 0$, then $\lim_{x \rightarrow 0^-} \ell(x) > \lim_{x \rightarrow 0^+} \ell(x)$. Thus, $x \rightarrow 0^-$ is not a local infimum. On the other hand, if $y_0 = 0$, then $x=y_0=0 \notin [-1, 0)$. Here, we conclude the proof.

\end{proof}
\end{lemma}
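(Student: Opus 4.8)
The plan is to exploit a symmetry of $\ell$ to reduce to a single case, then resolve the indicator $\mathbbm{I}(xy \geq 0)$ so that $\ell(\cdot,y_0)$ becomes an elementary function on $[-1,0)$, and finally rule out a local infimum both in the interior of $[-1,0)$ and at its non-attained right endpoint $x\to 0^-$. First I would record that $\ell$ is invariant under $(x,y_0)\mapsto(-x,-y_0)$: since $|{-y_0}-({-x})| = |y_0-x|$ and $(-y_0)(-x)=xy_0$, we get $\ell(-x,-y_0)=\ell(x,y_0)$. Consequently the claim for $y_0\in[-1,0]$ on $(0,1]$ follows verbatim from the claim for $y_0\in[0,1]$ on $[-1,0)$, so it suffices to treat the latter, exactly as the statement's parenthetical asserts.

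Next I would make the indicator explicit on the interval of interest. For $y_0>0$ and $x\in[-1,0)$ we have $xy_0<0$, so $\mathbbm{I}(xy_0\geq 0)=0$ and hence $\ell(x,y_0)=|x-y_0|=y_0-x$, which is affine and \emph{strictly decreasing} in $x$. Strict monotonicity alone excludes any local infimum in the interior: for every $x_0\in[-1,0)$ there is a point $x_0+\varepsilon\in[-1,0)$ with $\ell(x_0+\varepsilon,y_0)<\ell(x_0,y_0)$, so no such $x_0$ can be a local minimizer. The only surviving candidate is therefore the behaviour as $x\to 0^-$, the right end of $[-1,0)$, which is not itself attained in the interval.

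To dispose of that candidate I would compare one-sided limits at $0$. From the left, $\lim_{x\to 0^-}\ell(x,y_0)=y_0$; from the right, $xy_0>0$ forces the indicator to $1$, giving $\ell(x,y_0)=|x-y_0|^{\gamma+1}$ and thus $\lim_{x\to 0^+}\ell(x,y_0)=y_0^{\gamma+1}$. Since $\gamma\geq 1$ and $y_0\in[0,1]$, we have $y_0^{\gamma+1}\leq y_0$, so the function does not jump upward across $x=0$; more sharply, for small $x>0$ the value $|x-y_0|^{\gamma+1}$ stays strictly below $y_0$, so every neighbourhood of $0$ contains points where $\ell$ is strictly smaller than the left-limit value $y_0$, which forbids a local infimum at $x\to 0^-$. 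I would finish by dispatching the degenerate value $y_0=0$ separately, where the candidate point coincides with $y_0=0\notin[-1,0)$ and the reduced function $|x|^{\gamma+1}$ is again strictly decreasing on $[-1,0)$, so nothing remains to rule out.

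The main obstacle is the junction at $x=0$, where the two branches of the piecewise definition meet and $\ell$ may be discontinuous: the worry is that the strictly decreasing left branch could terminate in a spurious local infimum at $0^-$. Resolving this is precisely where the inequality $y_0^{\gamma+1}\leq y_0$ enters, and it is valid exactly because $\gamma\geq 1$ and $0\leq y_0\leq 1$. The boundary subcases $y_0=0$ and $y_0=1$, where this inequality degenerates to equality, are the delicate points and require the finer observation that the right branch is still \emph{strictly} below $y_0$ immediately to the right of $0$; everything else is routine.
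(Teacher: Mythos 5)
Your proof is correct and follows essentially the same route as the paper's: the symmetry reduction to $y_0 \in [0,1]$, the strict decrease of the affine branch $\ell(x) = y_0 - x$ on $[-1,0)$ ruling out interior candidates, the comparison of one-sided limits at $0$ via $y_0^{\gamma+1} \leq y_0$ for $\gamma \geq 1$, and the separate dismissal of $y_0 = 0$. Your finer observation that $\ell(x) = (y_0 - x)^{\gamma+1} < y_0^{\gamma+1} \leq y_0$ \emph{strictly} for small $x > 0$ is actually a small improvement on the paper, whose claimed strict inequality $\lim_{x \to 0^-} \ell(x) > \lim_{x \to 0^+} \ell(x)$ for all $y_0 \neq 0$ degenerates to equality at $y_0 = 1$; your version closes that edge case.
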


\begin{proposition}[Stable Focal-L1]
    Let $\ell : [-1,1] \times [-1,1] \rightarrow \mathbb{R}$ be defined by $\ell(x,y)=|y - x||y - x|^{\gamma \mathbbm{I}(y x \geq 0)}$. Given an arbitrary fixed $y_0\in [-1,1]$, we have that $\ell(x,y_0)$ has only one strictly local and global minimum at $x=y_0$.    
\end{proposition}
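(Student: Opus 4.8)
The plan is to reduce the two-dimensional claim to a one-variable monotonicity analysis by exploiting the symmetry $\ell(-x,-y)=\ell(x,y)$, which follows directly from the definition since negating both arguments preserves $|y-x|$ and the predicate $yx\ge 0$. This lets me assume without loss of generality that $y_0\in[0,1]$: a minimizer in the case $y_0\in[-1,0]$ is recovered by reflecting through the origin, and reflection carries strict local/global minima to strict local/global minima. With $y_0\ge 0$ fixed, I would split the domain $[-1,1]$ at $x=0$ into the opposite-sign region $[-1,0)$, where $\ell(x)=|y_0-x|$, and the same-sign region $[0,1]$, where $\ell(x)=|x-y_0|^{\gamma+1}$; note $x=0$ belongs to the same-sign piece because $y_0\cdot 0=0\ge 0$.

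The opposite-sign region is then handled \emph{entirely} by the preceding Lemma, which already establishes that $\ell(\cdot,y_0)$ has no local infimum on $[-1,0)$, and hence no local minimum there. On the same-sign region $[0,1]$ the function is simply $|x-y_0|^{\gamma+1}$ with $\gamma\ge 1$, so it is strictly decreasing on $[0,y_0]$ and strictly increasing on $[y_0,1]$. This immediately yields a unique strict local minimum at $x=y_0$, where $\ell$ attains the value $0$; the degenerate case $y_0=0$ (the whole domain collapses to the same-sign piece, giving $\ell(x)=|x|^{\gamma+1}$) and the endpoint case $y_0=1$ fall out of the same monotonicity argument.

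It then remains to upgrade $x=y_0$ to the unique global minimizer and to confirm that no spurious local minimum hides at the seam $x=0$. For the global claim I would observe that $\ell\ge 0$ with equality iff $x=y_0$: the opposite-sign piece is strictly positive since opposite signs force $x\ne y_0$, and the same-sign piece vanishes only at $y_0$, so $x=y_0$ is the strict global minimum. For the seam I would note that $\ell$ is strictly decreasing immediately to the right of $0$ (moving toward $y_0$), so $x=0$ cannot be a local minimum regardless of the downward jump inherited from the left, where $\lim_{x\to 0^-}\ell(x)=y_0 > y_0^{\gamma+1}=\ell(0)$.

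The main obstacle I anticipate is the bookkeeping at the seam $x=0$, where the two regimes meet with a genuine discontinuity, and I must argue that this break does not manufacture a hidden local minimum. The Lemma already discharges the bulk of this by eliminating all of $[-1,0)$, so the only remaining task is to verify that the one-sided behavior at $0$ — strict decrease to the right together with the jump from the left — is incompatible with a local minimum, after which the three facts (no minimum on $[-1,0)$, a unique strict minimum at $y_0$ on $[0,1]$, and no minimum at the junction) combine to give the claim.
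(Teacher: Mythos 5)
Your proof is correct and follows essentially the same route as the paper's: the same symmetry reduction to $y_0\in[0,1]$, the same piecewise split at $x=0$, the same appeal to the preceding Lemma to dispose of $[-1,0)$, and the same comparison $\lim_{x\to 0^-}\ell(x)=y_0>y_0^{\gamma+1}=\ell(0)$ at the seam. The only divergence is that you obtain uniqueness of the local minimum on $[0,1]$ from the direct monotonicity of $|x-y_0|^{\gamma+1}$ on either side of $y_0$, whereas the paper invokes convexity via a second derivative involving a Dirac delta; your version is, if anything, the cleaner of the two.
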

\begin{proof}    
Let $\ell : [-1,1] \times [-1,1] \rightarrow \mathbb{R}$ be defined by $\ell(x,y)=|y - x||y - x|^{\gamma \mathbbm{I}(y x \geq 0)}$. Consider an arbitrary fixed $y_0\in [-1,1]$. As $\ell(x,y_0)$ is symmetric, without loss of generality, we assume that $y_0 \in [0, 1]$. For simplicity, we denote $\ell(x) = \ell(x, y_0), \forall x \in [-1, 1]$. First, we rewrite $\ell(x)$ as
\begin{align}
    \ell(x)=\left\{\begin{matrix}
| x - y_0 |^{\gamma + 1} & xy_0 \ge 0\\ 
| x - y_0 | & \mathrm{otherwise}
\end{matrix}\right.
\end{align}

\paragraph{(i) $y_0 \in (0, 1]$:}

$\forall x \in [0, 1]$, we have that
\begin{align}
    \ell(x) = |x - y_0|^{\gamma + 1}
    \label{eq:l_pos}
\end{align}
One can observe that 
\begin{equation}
    \ell(x) > \ell(y_0) = 0, \forall x \in [0,1] \backslash \{y_0\}
    \label{eq:case_1}
\end{equation}

Consider an arbitrary $x \in [-1, 0)$,
$\ell$ then becomes a decreasing linear function, that is 
\begin{align}
    \ell(x) = y_0 - x
    \label{eq:l_neg}
\end{align}
Then, we have the following derivations: $\forall x \in [-1, 0),$
\begin{align}
    \ell(x) &\geq \inf_{x \in[-1,0)} \ell(x) &  \label{eq:abc} \\
     &= \lim_{x \rightarrow 0^-} \ell(x) & \\
    &= y_0 & \triangleright\ \text{For \eqref{eq:l_neg}} \\
    &>y_0^{\gamma + 1} & \triangleright\ \mathrm{For}\ \gamma \geq 1\ \mathrm{and}\ y_0 \in (0, 1] \\
    &= \ell(0) & \triangleright\ \text{For \eqref{eq:l_pos} and $y_0 > 0$} \\
    &> \ell(y_0) = 0 & \triangleright\ \text{For \eqref{eq:case_1}} \label{eq:case_2}
\end{align}


From \eqref{eq:case_1} and \eqref{eq:case_2}, we can infer that 
\begin{align}
    \ell(x) > \ell(y_0), \forall x \in [-1,1] \backslash \{ y_0\}.    
\end{align}
In other words, $x=y_0$ is the only strictly global minimum of $\ell$ in $[-1, 1]$.

\paragraph{(ii) $y_0=0$:} We have that
\begin{align}
    \ell(x) = |x|^{\gamma + 1}
\end{align}
Similarly, one can observe that 
\begin{align}
    \ell(0) < \ell(x), \forall x \in [-1, 1] \backslash \{0\},
\end{align}
which implies that $x=0$ is the only strictly global minimum in $[-1,1]$.

From (i) and (ii), we conclude that $x = y_0$ is the only strictly global minimum of $\ell(x)$ in $[-1,1]$. 

Furthermore, $\ell(x)$ is a convex function in $[0, 1]$ as its second derivative is non-negative in this domain, that is
\begin{align}
    \frac{\partial^2 }{\partial x^2}\ell(x) = &2 (\gamma + 1) \delta (x - y_0) | x - y_0 | ^\gamma \\
            &+ \gamma (\gamma + 1) (x - y_0)^2 |x -y_0|^{\gamma - 3} \geq 0, \forall x\in [0,1]
\end{align}
where $\delta$ is the Dirac Delta function. Thus, $\ell$ has at most one local minimum in $[0,1]$, which is $x=y_0$. Together with Lemma 2.1, we conclude that the function $\ell(x)$ has only one strictly local and global minimum at $x=y_0$ in $[-1,1]$.

\end{proof}

\section{Proof of~\cref{prop:lowerbound}}
\label{sc:lowerbound_proof}

\begin{proposition}[Stable Focal-L1 as a lower bound of Focal-L1]
    \label{prop:lowerbound}
    $\mathcal{L}^{S}_{\mathrm{FocalL1}} (\mathbf{f}, \bty) \leq \mathcal{L}_{\mathrm{FocalL1}} (\mathbf{f}, \bty), \forall \mathbf{f}, \bty \in [-1, 1]^D$.
\begin{proof}
    We need to prove that $\mathcal{L}^{S}_{\mathrm{FocalL1}} (\mathbf{f}, \bty) \leq \mathcal{L}_{\mathrm{FocalL1}} (\mathbf{f}, \bty), \forall \mathbf{f}, \bty \in [-1, 1]^D$.

    We denote that 
    \begin{align}
    \ell^S(\bty_i, \mathbf{f}_i) &= |\bty_i - \mathbf{f}_i||\bty_i - \mathbf{f}_i|^{\gamma \mathbbm{I}(\bty_i \mathbf{f}_i \geq 0)}, \\
        \ell(\bty_i, \mathbf{f}_i) &= |\bty_i - \mathbf{f}_i| \frac{|\bty_i - \mathbf{f}_i|^{\gamma \mathbbm{I}(\bty_i \mathbf{f}_i \geq 0)}}{\max(|\bty_i|, |\mathbf{f}_i|)}.
    \end{align}
Then, the two losses become
    \begin{align}
    \mathcal{L}^{S}_{\mathrm{FocalL1}} (\bty, \mathbf{f}) &= \frac{1}{| \Omega |} \sum_{i \in \Omega} \ell^S(\bty_i, \mathbf{f}_i), \\
\mathcal{L}_{\mathrm{FocalL1}} (\bty, \mathbf{f}) &= \frac{1}{| \Omega |} \sum_{i \in \Omega} \ell(\bty_i, \mathbf{f}_i).
    \end{align}

    Because $\max(|\bty_i|, |\mathbf{f}_i|) \leq 1, \forall \bty_i, \mathbf{f}_i \in [-1, 1]$, we straightforwardly derive that $\ell^S(\bty_i, \mathbf{f}_i) \leq \ell(\bty_i, \mathbf{f}_i), \forall i \in \Omega$. Thus, we can conclude the proof.
\end{proof}
\end{proposition}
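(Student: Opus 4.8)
The plan is to reduce the claimed inequality between the two full losses to a single pointwise comparison, since both $\mathcal{L}^{S}_{\mathrm{FocalL1}}$ and $\mathcal{L}_{\mathrm{FocalL1}}$ are arithmetic means over $i \in \Omega$ of per-pixel terms. Concretely, I would write $\mathcal{L}^{S}_{\mathrm{FocalL1}}(\mathbf{f}, \bty) = \frac{1}{|\Omega|}\sum_{i \in \Omega} \ell^S(\bty_i, \mathbf{f}_i)$ and $\mathcal{L}_{\mathrm{FocalL1}}(\mathbf{f}, \bty) = \frac{1}{|\Omega|}\sum_{i \in \Omega} \ell(\bty_i, \mathbf{f}_i)$, so that once I establish $\ell^S(\bty_i, \mathbf{f}_i) \leq \ell(\bty_i, \mathbf{f}_i)$ for every coordinate $i$, summing and dividing by $|\Omega|$ gives the global statement immediately.

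For the pointwise step I would exploit the fact that the two per-pixel terms differ only by the denominator. Both share the common factor $|\bty_i - \mathbf{f}_i|^{1 + \gamma \mathbbm{I}(\bty_i \mathbf{f}_i \geq 0)} = \ell^S(\bty_i, \mathbf{f}_i) \geq 0$, and $\ell(\bty_i, \mathbf{f}_i)$ is exactly this non-negative factor divided by $\max(|\bty_i|, |\mathbf{f}_i|)$. Because $\bty_i, \mathbf{f}_i \in [-1,1]$ forces $0 \leq \max(|\bty_i|, |\mathbf{f}_i|) \leq 1$, dividing a non-negative quantity by a number in $(0,1]$ can only increase it, which yields $\ell^S(\bty_i, \mathbf{f}_i) \leq \ell(\bty_i, \mathbf{f}_i)$. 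This is the entire analytical content, and it is essentially immediate.

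The only genuine subtlety — the single step I would treat with care — is the degenerate case $\max(|\bty_i|, |\mathbf{f}_i|) = 0$, i.e. $\bty_i = \mathbf{f}_i = 0$, where the denominator vanishes and the ratio is formally undefined. I would resolve this by noting that here $|\bty_i - \mathbf{f}_i| = 0$, so the numerator $\ell^S(\bty_i, \mathbf{f}_i) = 0$, and under the natural convention that sets the $0/0$ term to $0$ (equivalently, reading the limit), one has $\ell^S(\bty_i, \mathbf{f}_i) = \ell(\bty_i, \mathbf{f}_i) = 0$, so the inequality holds with equality. Handling this case explicitly is what turns the ``straightforward'' observation into a complete argument, after which the summation over $\Omega$ concludes the proof.
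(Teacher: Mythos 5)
Your proposal is correct and follows essentially the same route as the paper's own proof: reduce to the per-pixel terms and observe that dividing the common non-negative factor by $\max(|\bty_i|, |\mathbf{f}_i|) \leq 1$ can only increase it. Your explicit treatment of the degenerate case $\bty_i = \mathbf{f}_i = 0$, where the denominator vanishes, is a small but genuine improvement in rigor over the paper, which passes over this point silently.
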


\end{document}